\documentclass[accepted]{uai2026} 
                        
\usepackage[american]{babel}

\usepackage{booktabs}
\usepackage{makecell}
\usepackage{siunitx}

\usepackage{natbib} 
\usepackage{mathtools} 
\usepackage{amssymb}
\usepackage{booktabs} 
\usepackage{tikz} 

\usepackage{amsthm}

\usepackage{algorithm}
\usepackage{algorithmic}

\usepackage[capitalize,noabbrev]{cleveref}

\usepackage[textsize=tiny]{todonotes}

\usepackage{notation}

\theoremstyle{plain}
\newtheorem{theorem}{Theorem}[section]

\newtheorem{lemma}[theorem]{Lemma}

\theoremstyle{definition}

\newtheorem{assumption}[theorem]{Assumption}
\theoremstyle{remark}
\newtheorem{remark}[theorem]{Remark}

\crefname{theorem}{Theorem}{Theorems}
\Crefname{theorem}{Theorem}{Theorems}
\crefname{proposition}{Proposition}{Propositions}
\Crefname{proposition}{Proposition}{Propositions}
\crefname{lemma}{Lemma}{Lemmas}
\Crefname{lemma}{Lemma}{Lemmas}
\crefname{corollary}{Corollary}{Corollaries}
\Crefname{corollary}{Corollary}{Corollaries}
\crefname{definition}{Definition}{Definitions}
\Crefname{definition}{Definition}{Definitions}
\crefname{assumption}{Assumption}{Assumptions}
\Crefname{assumption}{Assumption}{Assumptions}
\crefname{remark}{Remark}{Remarks}
\Crefname{remark}{Remark}{Remarks}

\title{Learning Shortest Paths with Generative Flow Networks}

\makeatletter
\def\thanks#1{\protected@xdef\@thanks{\@thanks\protect\footnotetext{#1}}}
\makeatother

\makeatletter
\renewcommand\AB@affilsepx{\hspace{1em}\protect\Affilfont}
\makeatother

\author[1]{Nikita~Morozov}\thanks{Correspondence to: Nikita Morozov <\href{mailto:nvmorozov@hse.ru}{nvmorozov@hse.ru}>}
\author[1]{Ian~Maksimov}
\author[2,3]{Daniil~Tiapkin}
\author[1]{Sergey~Samsonov}
\affil[1]{%
    HSE University
}
\affil[2]{
    CMAP, CNRS, École polytechnique
}
\affil[3]{
    LMO, Université Paris-Saclay
}

\begin{document} 
\maketitle 



\begin{abstract} 
  In this paper, we present a novel learning framework for finding shortest paths in graphs utilizing Generative Flow Networks (GFlowNets). First, we examine theoretical properties of GFlowNets in non-acyclic environments in relation to shortest paths. We prove that, if the total flow is minimized, forward and backward policies traverse the environment graph exclusively along shortest paths between the initial and terminal states. Building on this result, we show that the pathfinding problem in an arbitrary graph can be solved by training a non-acyclic GFlowNet with flow regularization. We experimentally demonstrate the performance of our method in pathfinding in permutation environments and in solving Rubik's Cubes. For the latter problem, our approach shows competitive results with state-of-the-art machine learning approaches designed specifically for this task in terms of the solution length, while requiring smaller search budget at test-time. 
\end{abstract}

\section{Introduction}
\label{sec:intro}
Finding shortest paths in large discrete graphs is a fundamental problem in artificial intelligence, underlying applications in planning, routing, robotics, and combinatorial optimization~\citep{hart1968formal, gallo1988shortest, sutton1998reinforcement, schrijver2003combinatorial, madkour2017survey, karur2021survey}. Classical methods such as Dijkstra’s algorithm and A* provide complete and often optimal solutions when the graph can be explored and a suitable heuristic is available~\citep{hart1968formal}. However, designing accurate heuristics is often challenging in high-dimensional spaces, and many domains of practical interest induce state spaces so large that even storing a meaningful fraction of the graph is infeasible. In particular, this is the case for the combinatorial puzzles and planning problems whose transition structure forms a Cayley graph~\citep{cooperman1990applications}, where the number of reachable configurations may grow factorially with problem size. As a result, modern work has focused on learning-based approaches that approximate distance-to-goal functions and combine them with heuristic search at test time. Notable examples include deep reinforcement learning methods for solving the Rubik’s Cube~\citep{mcaleer2018solving, agostinelli2019solving} and more recent advances on general permutation puzzles based on learning to estimate random walk distances and using the estimates to guide beam search~\citep{chervov2025rubik}. 

Generative Flow Networks (GFlowNets), on the other hand, have emerged as a probabilistic framework for learning to sample compositional objects proportionally to a given reward function~\citep{bengio2021flow}, representing a synthesis of variational inference and reinforcement learning paradigms~\citep{malkin2022gflownets, zimmermann2022variational, tiapkin2024generative, deleu2024discrete}. GFlowNets define forward and backward policies over transitions in a directed acyclic graph, and enforce consistency between them via various objectives~\citep{malkin2022trajectory, bengio2023gflownet, madan2023learning}. These models have found success in different areas within discrete domains, including molecule generation~\citep{koziarski2024rgfn, cretu2025synflownet}, combinatorial optimization~\citep{zhang2023solving, kim2025ant}, causal discovery~\citep{manta2023gflownets, atanackovic2024dyngfn}, Bayesian structure learning~\citep{deleu2022bayesian, deleu2023joint}, and Bayesian inference in large language models~\citep{hu2023amortizing}.

Importantly for pathfinding, many environments of interest are cyclic: actions can be undone, and trajectories may revisit states. Recent works~\citep{brunswic2024theory, morozov2025revisiting} extend GFlowNets beyond acyclic generation graphs to non-acyclic environments and highlight that, in this setting, the expected trajectory length becomes a key quantity controlling sampling efficiency and avoiding pathological cycling behavior. However, the structural implications of minimizing this quantity have not been fully analyzed in these works.

In this paper, we establish a previously unexplored theoretical connection between minimizing expected trajectory length in non-acyclic GFlowNets and finding shortest paths. Based on this connection, we show how GFlowNets can be utilized as a probabilistic learning framework for solving general pathfinding problems. Our contributions can be summarized as follows:

\begin{itemize}
    \item We prove that, when the expected trajectory length is minimized, GFlowNet policies traverse the environment graph only along shortest paths between the initial and terminal states, assigning zero probability to all non-shortest trajectories. 
    \item Building on this characterization, we propose a constructive reduction from shortest-path problems in arbitrary unweighted graphs to training a non-acyclic GFlowNet that minimizes its expected trajectory length. Unlike heuristic-based approaches that learn value functions to guide search~\citep{agostinelli2019solving, chervov2025rubik}, our method directly aims to learn a policy whose optimal solution recovers exact shortest paths.
    \item We present a training algorithm based on a regularized variant of the trajectory balance objective~\citep{malkin2022trajectory}, and provide experimental validation on a synthetic permutation puzzle (Swap), as well as 2x2x2 and 3×3×3 Rubik’s Cubes. While our method aims to recover exact shortest paths with a single learned policy, we also show that it can be combined with beam search at test time to improve solution lengths. We compare against a state-of-the-art approach for pathfinding in Rubik’s Cubes~\citep{chervov2025cayleypy}, demonstrating that our method exhibits competitive performance in terms of the solution length, while scaling better when smaller beam widths are used.
\end{itemize}
Source code: \href{https://github.com/GreatDrake/gfn-pathfinding}{github.com/GreatDrake/gfn-pathfinding}.

\section{Background}\label{sec:background}

\subsection{Non-Acyclic GFlowNets}\label{sec:gflow_background}

\cite{brunswic2024theory} and \cite{morozov2025revisiting} generalize the theoretical construction of GFlowNets \citep{bengio2023gflownet} to environments that may contain cycles. We follow the theory and notation of \cite{morozov2025revisiting}, which is summarized below.

The sequential decision-making process is described by a directed graph \(\cG = (\mathcal{S}, \mathcal{E})\), where \(\mathcal{S}\) is a finite state space and \(\mathcal{E} \subseteq \mathcal{S} \times \mathcal{S}\) is a finite set of edges (or transitions). The following assumption is made with respect to the structure of $\cG$:

\begin{assumption}
\label{graph_assumption}
    (1) There is a special \textit{initial state} $s_0$ with no incoming edges and a special \textit{sink state} $s_f$ with no outgoing edges; (2) For any state $s \in \cS$, there exists a path from $s_0$ to $s$ and from $s$ to $s_f$.
\end{assumption}

Let $\cT$ be a set of all finite-length trajectories $\tau = \left(s_0 \to s_1 \to \ldots \to s_{n_{\tau}} \to s_f\right)$ from $s_0$ to $s_f$, where we use $n_{\tau}$ to denote the length of the trajectory $\tau$. We use a convention $s_{n_{\tau} + 1} = s_f$. We say that $\tau$ terminates in a state $s$ if its last transition is $s \to s_f$. The edges of the form $(s \to s_f)$ are called terminating transitions, and the states $s$ that have an outgoing edge into $s_f$ are called \textit{terminal states}. The set of terminal states is denoted by $\cX$, and the probability distribution of interest $\cR(x) / \cZ$ is defined on it, where $\cR(x) > 0$ is called GFlowNet reward and $\cZ = \sum_{x \in \cX} \cR(x)$ is an unknown normalizing constant. 

GFlowNets operate with a pair of policies, where a \textit{forward policy} $\PF(s' \mid s)$ is a distribution over children of each state, and a \textit{backward policy} $\PB(s \mid s')$ is a distribution over parents of each state. During training, GFlowNets aim to find a pair of policies such that their induced distributions over trajectories coincide:
\begin{equation}
\label{eq:tb}
     \cP(\tau) = \prod_{t=0}^{n_\tau} \PF \left(s_{t+1} \mid s_{t}\right) = \prod_{t=0}^{n_\tau} \PB \left(s_{t} \mid s_{t+1}\right)\,.
\end{equation}
The main goal is to have such $\cP$ that for any $x \in \cX$, probability that $\tau \sim \cP$ terminates in $x$ coincides with $\cR(x) / \cZ$. This property is called the \textit{reward matching condition}, and w.r.t $\PB$ it can be simply written as \begin{equation}
\label{eq:rm}
\PB(x \mid s_f) = \cR(x) / \cZ \;\; \forall x \in \cX.
\end{equation}
If both Equation~\eqref{eq:tb} and Equation~\eqref{eq:rm} are satisfied, $\PF$ can be used to sample terminal states from the reward distribution (see \cite{bengio2023gflownet} and \cite{morozov2025revisiting} for more details).

\cite{brunswic2024theory} and \cite{morozov2025revisiting} point out that the main difference of non-acyclic GFlowNets from their standard acyclic counterparts is the fact that the trajectory length $n_\tau$ is generally unbounded. Since $\E[n_\tau]$ exactly represents the average number of forward policy steps a GFlowNet needs to make to produce a sample, it also signifies the practical efficiency of the sampler. Thus, it is natural formulate the problem of training a non-acyclic GFlowNet that minimizes $\E[n_\tau]$.

\cite{morozov2025revisiting} define state flows through the expected number of times a certain state is visited
\begin{equation}
\label{eq:flow}
\cF(s) =  \cZ \cdot \;\E_{\tau \sim \cP}\left[ \sum_{t = 0}^{n_{\tau}+1} \ind\{s_t = s\}\right].
\end{equation}
Then, the expected trajectory length coincides with the normalized total flow $\E[n_\tau] = \tfrac{1}{\cZ} \sum_{s \in \cS \setminus \{s_0, s_f\}} \cF(s)$, so learning a model with the smallest $\E[n_\tau]$ is equivalent to learning a model with the smallest total flow. A simple practical approach proposed in \cite{brunswic2024theory} and further explored in \cite{morozov2025revisiting} involves adding a regularizer $\lambda \cF_\theta(s)$ to the utilized loss, where $\lambda$ is a regularization coefficient. E.g., detailed balance~\citep{bengio2023gflownet} loss with state flow regularization is defined as 
\[
\mathcal{L}(s \to s') = \left(\log \frac{\cF_{\theta}(s) \PF(s' \mid s, \theta)}{\cF_{\theta}(s')\PB(s \mid s', \theta)} \right)^2 + \lambda \cF_\theta(s)\eqsp,
\]
where reward matching is enforced by substituting $\cF_{\theta}(s_f) \cdot \PB(x  \mid s_f, \theta) = \cR(x)$.

\subsection{Pathfinding on Cayley Graphs}
The problem of pathfinding on Cayley graphs has been of particular interest to the machine learning community~\citep{mcaleer2018solving, agostinelli2019solving, takano2023selfsupervision, chervov2025rubik, chervov2025cayleypy}. The most famous instance of this problem is solving Rubik's Cube. Here, all possible cube configurations are represented as vertices, and edges correspond to moves between them (turning a side of the cube), with the main goal consisting of finding the shortest paths to the solved configuration. Many other existing puzzles, games, and riddles can be described using Cayley graphs, where gameplay corresponds to finding a path towards a predefined goal state~\citep{mulholland2016permutation}.

Existing approaches generally follow a similar idea: train a neural network that evaluates how far a state is from the goal state, and use its predictions to guide some heuristic search algorithm during inference. \cite{mcaleer2018solving} and \cite{agostinelli2019solving} use reinforcement learning to learn the value function via deep approximate value iteration~\citep{sutton1998reinforcement}. Then, \cite{mcaleer2018solving} use the learned value within Monte Carlo Tree Search~\citep{coulom2006efficient}, and~\cite{agostinelli2019solving} in a variation of A* algorithm~\citep{ebendt2009weighted} to search for a path to the goal state.

A recent work of \cite{chervov2025rubik} obtained state-of-the-art results on Rubik's Cube and other permutation puzzles by generating non-backtracking random walks~\citep{alon2007non} from the goal state, and training a neural network to approximate the average number of steps a random walk has to take to reach a state in the graph. Then, to find shortest paths during inference, the predictions of the neural network are used to guide a variation of the beam search algorithm. 

Our work follows a different paradigm, aiming to directly train a policy that will move across shortest paths to the predefined goal state.

\section{Learning Shortest Paths with GFlowNets}\label{sec:methodology}

\subsection{Conditions for Smallest Expected Trajectory Length}

In this section, we theoretically examine non-acyclic GFlowNets that minimize their respective expected trajectory length. We prove that $\E[n_\tau]$ is minimized if and only if $\PF$ and $\PB$ move exclusively across shortest paths between $s_0$ and terminal states. We denote $\cl(s')$ to be the length of a shortest path from $s_0$ to $s'$; note that by Assumption~\ref{graph_assumption}, such a path always exists.

To define a probability distribution over $\cT$, \cite{morozov2025revisiting} assume that $\PB(s \mid s') > 0$ for any transition $(s \to s')$. As we will show, minimizing $\E[n_\tau]$ requires $\PB$ to assign zero probabilities to transitions that do not lie on a shortest path. Therefore we replace the assumption $\PB(s \mid s') > 0$ with a more general and natural assumption that requires $\E[n_\tau] < +\infty$. Here $n_\tau$ is the average number of steps a backward random walk starting in $s_f$ with transition probabilities given by $\PB$ needs to take to reach $s_0$.
\begin{assumption}
\label{pb_assumption}
    The backward policy $\PB$ satisfies $\E[n_\tau] < +\infty$.
\end{assumption}
This assumption implies that $\PB$ can therefore be viewed as a transition kernel of an absorbing Markov chain~\citep{kemeny1969finite}, and defines a valid probability distribution over $\cT$ with probability function $\cP(\tau) = \prod_{t = 0}^{n_\tau} \PB(s_t \mid s_{t + 1})$. For a formal derivation and detailed discussion, we refer the reader to Appendix~\ref{app:assumpt}. Next, we provide two lemmas that link $\E[n_\tau]$ to the lengths of shortest paths.

\begin{lemma}
\label{lemma:lower_bound} 
Grant \Cref{graph_assumption}. Then for any backward policy $\PB$ that satisfies Assumption~\ref{pb_assumption}, it holds
\begin{equation}
\label{eq:lower_bound}
\E[n_\tau] \ge \sum\limits_{x \in \cX} \PB(x \mid s_f) \cl(x)\,.
\end{equation}
\end{lemma}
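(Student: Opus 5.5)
The plan is a pathwise comparison followed by taking expectations. Under \Cref{pb_assumption}, $\PB$ defines a valid probability distribution $\cP$ over $\cT$, with $\cP(\tau)=\prod_{t=0}^{n_\tau}\PB(s_t\mid s_{t+1})$; I take this from Appendix~\ref{app:assumpt}. Take any trajectory $\tau=(s_0\to s_1\to\dots\to s_{n_\tau}\to s_f)$ with $\cP(\tau)>0$, and let $x=s_{n_\tau}$ be its terminal state. The prefix $s_0\to\dots\to s_{n_\tau}$ is a walk in $\cG$ from $s_0$ to $x$ of length $n_\tau$. A shortest path has length at most that of any walk between the same endpoints (a walk contains a path), so
\[
n_\tau \ge \cl(s_{n_\tau}) \qquad \text{for every } \tau\in\cT.
\]

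Next I take expectations under $\cP$. Because $n_\tau\ge 0$ and $\cl\ge 0$, and because $\E[n_\tau]<+\infty$ by \Cref{pb_assumption}, summing over the countable set $\cT$ raises no convergence issues. This gives $\E[n_\tau]\ge \E[\cl(s_{n_\tau})]=\sum_{x\in\cX}\cP(\tau\text{ terminates in }x)\,\cl(x)$.

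The remaining step is to identify $\cP(\tau\text{ terminates in }x)=\PB(x\mid s_f)$. In the backward-chain description, the first backward step from $s_f$ selects the terminal state $x$ with probability $\PB(x\mid s_f)$. Conditionally on that choice, the backward walk from $x$ reaches $s_0$ almost surely, since $\E[n_\tau]<\infty$ implies absorption with probability one. Formally, the sum of $\prod_{t=0}^{n_\tau-1}\PB(s_t\mid s_{t+1})$ over all backward paths from $x$ to $s_0$ equals $1$, which is exactly the statement that the Markov chain started at $x$ is absorbed at $s_0$. Factoring out $\PB(x\mid s_f)$ then gives the marginal.

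I expect this marginal identification to be the only subtle point, because it relies on absorption being almost sure for every $x$ with $\PB(x\mid s_f)>0$. That follows from $\E[n_\tau]<\infty$, since a positive probability of non-absorption would make the expected length infinite. Here $s_0$ has no incoming edges, so the chain is absorbed there. Terminal states with $\PB(x\mid s_f)=0$ contribute zero to both sides and can be ignored. Combining the three steps yields \eqref{eq:lower_bound}.
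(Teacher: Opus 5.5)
Your proof is correct and follows the paper's argument: the paper applies the tower property $\E[n_\tau]=\sum_{x\in\cX}\PB(x\mid s_f)\,\E[n_\tau\mid s_{n_\tau}=x]$ and bounds each conditional expectation below by $\cl(x)$, which amounts to your pathwise bound $n_\tau\ge\cl(s_{n_\tau})$ averaged over $\cP$. Your explicit argument that the terminal-state marginal equals $\PB(x\mid s_f)$ (via almost-sure absorption at $s_0$) spells out a step the paper uses implicitly. Working with the unconditional expectation also avoids conditioning on events $\{s_{n_\tau}=x\}$ of probability zero.
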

\begin{proof}
First, we apply the tower property of conditional expectation
\begin{align} \label{eq:trajlen}
\begin{split}
    \E[n_\tau] &= \E[\E[n_\tau \mid s_{n_\tau} = x]] \\
    &= \sum\limits_{x \in \cX} \PB(x \mid s_f) \E[n_\tau \mid s_{n_\tau} = x]\,.
\end{split}
\end{align}
Then, for any $x \in \cX$, since the length of any trajectory from $s_0$ to $x$ by definition is at least $\cl(x)$, we have $\E[n_\tau \mid s_{n_\tau} = x] \ge \cl(x)$, finishing the proof.
\end{proof}

This is a rather intuitive result, stating that $\E[n_\tau]$ is lower bounded by the expected length of a shortest path from $s_0$ to a terminal state weighted by $\PB(x \mid s_f)$. Next, we show that there always exists a backward policy $\PB(\cdot|s)$ that achieves equality in Equation~\eqref{eq:lower_bound}. Recall that we write $\cR(x)$ for GFlowNet reward.

\begin{lemma}
\label{lemma:best_pb_existence}
Grant \Cref{graph_assumption}. Then, there always exists a backward policy that satisfies the reward matching condition $\PB(s \mid s_f) = \cR(s)$ and induces the trajectory distribution with expected trajectory length equal to
    \begin{equation}\label{eq:lower_bound_exact}
    \E[n_\tau] = \sum\limits_{x \in \cX} \frac{\cR(x)}{\cZ} \cl(x)\,.
    \end{equation}
\end{lemma}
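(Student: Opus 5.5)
We construct the backward policy explicitly from the shortest-path distances $\cl(\cdot)$. The reward matching condition is understood as $\PB(x \mid s_f) = \cR(x)/\cZ$, as in \eqref{eq:rm}. For every state $s' \in \cS \setminus \{s_0, s_f\}$, let
\[
\mathrm{Pa}^*(s') = \{ s : (s \to s') \in \cE,\ \cl(s) = \cl(s') - 1 \}
\]
be its set of ``shortest-path parents''. By \Cref{graph_assumption}, $s'$ is reachable from $s_0$, so $\cl(s') < \infty$. Also $\cl(s') \ge 1$, since $s' \neq s_0$. Taking the second-to-last vertex of any shortest path from $s_0$ to $s'$ shows that $\mathrm{Pa}^*(s')$ is nonempty. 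We then define $\PB(\cdot \mid s')$ as any distribution supported on $\mathrm{Pa}^*(s')$, for instance the uniform one. At the sink we set $\PB(x \mid s_f) = \cR(x)/\cZ$ for $x \in \cX$, which is a valid distribution over the parents of $s_f$ because $\cR(x) > 0$ and the values sum to one. This gives reward matching by construction.

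Next we verify \Cref{pb_assumption} and compute $\E[n_\tau]$. Start the backward walk at $s_f$ and suppose it first moves to $x \in \cX$. Each subsequent backward step strictly decreases $\cl$ by exactly one. Hence after exactly $\cl(x)$ further steps the walk reaches a state $s$ with $\cl(s) = 0$. Since $s_0$ is the only state at distance zero, that state is $s_0$, and the walk is absorbed there: $s_0$ has no incoming edges, so it is the natural absorbing point of the backward chain. The walk never revisits a state, because $\cl$ is strictly decreasing along it. So every trajectory in the support of $\cP$ that terminates in $x$ is a shortest path from $s_0$ to $x$, of length $n_\tau = \cl(x)$ deterministically given $s_{n_\tau} = x$. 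In particular $n_\tau \le \max_{x \in \cX} \cl(x) < \infty$ surely, so $\E[n_\tau] < +\infty$ and \Cref{pb_assumption} holds.

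Finally, we apply the tower-property decomposition \eqref{eq:trajlen} from the proof of \Cref{lemma:lower_bound}. It gives $\E[n_\tau] = \sum_{x} \PB(x \mid s_f)\, \E[n_\tau \mid s_{n_\tau} = x] = \sum_x \frac{\cR(x)}{\cZ} \cl(x)$, which is \eqref{eq:lower_bound_exact}.

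The main point needing care is the well-definedness of the construction. We must check that $\mathrm{Pa}^*(s')$ is nonempty for every non-initial state, which uses reachability from \Cref{graph_assumption}. We must also check the indexing convention: the length $n_\tau$ counts the states $s_1, \dots, s_{n_\tau}$ excluding $s_f$, so a shortest path to $x$ has $n_\tau = \cl(x)$ and not $\cl(x)+1$. Everything else is immediate.
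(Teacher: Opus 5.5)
Your proof is correct and follows essentially the same approach as the paper's. The paper fixes a single shortest-path parent $\mathsf{par}(s')$ and sets $\PB(s \mid s') = \ind\{s = \mathsf{par}(s')\}$, whereas you allow any distribution supported on the shortest-path parents (a generality the paper mentions only after Theorem~\ref{theorem:smalles_length}) and you check Assumption~\ref{pb_assumption} explicitly; the conclusion via the decomposition \eqref{eq:trajlen} is identical.
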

\begin{proof}
Let us define a map $\mathsf{par} \colon \cS \setminus \{s_0, s_f\}\to \cS \setminus \{s_f\}$, where for any $s \in \cS \setminus \{s_0, s_f\}$ the state $\mathsf{par}(s)$ is defined arbitrarily such that $(\mathsf{par}(s) \to s) \in \cE$ and $\cl(s) = \cl(\mathsf{par}(s)) + 1$. Such a state always exists because there always exists a path between $s_0$ and $s$, and precedent state on the corresponding shortest path from $s_0$ to $s$ satisfies $\cl(s) = \cl(\mathsf{par}(s)) + 1$. Then, define $\PB(x \mid s_f) = \cR(x) / \cZ$ for all $x \in \cX$ (to satisfy the reward matching condition), and $\PB(s \mid s') = \ind\{s = \mathsf{par}(s')\}$ for all $s' \in \cS \setminus \{s_f\}$. Therefore, when sampling a trajectory using $\PB$, we first go from $s_f$ to some terminal state $x$, and after that, at each step, we get exactly one edge closer to $s_0$, meaning that a shortest path between $s_0$ and $x$ is always sampled. Thus, for all $x \in \cX$ we have $\E[n_\tau \mid s_{n_\tau} = x] = \cl(x)$. Applying Equation~\eqref{eq:trajlen}, we conclude the proof.
\end{proof}

The proof is constructive, showing that a backward policy that always travels along shortest paths between $s_0$ and terminal states has the smallest possible $\E[n_\tau]$. Finally, combining the results of the two lemmas, we state the main theorem.

\begin{theorem}
\label{theorem:smalles_length}
Grant \Cref{graph_assumption} and let $\PB$ be a backward policy that satisfies the reward matching condition $\PB(s \mid s_f) = \cR(s)$ and satisfies Assumption~\ref{pb_assumption}. Then $\PB$ minimizes $\E[n_\tau]$ if and only if, for each trajectory $\tau \in \cT$ that terminates at any state $x \in \cX$, $n_\tau \neq \cl(x)$ implies $\cP(\tau) = 0$. In other words, minimization of expected trajectory length $\E[n_\tau]$ is equivalent to assigning zero probability to all trajectories that are not shortest paths from $s_0$ to a terminal state.
\end{theorem}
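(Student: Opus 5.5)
The plan is to combine the two lemmas into a sandwich argument. Fix the reward matching condition, so that $\PB(x \mid s_f) = \cR(x)/\cZ$ is the same for every admissible backward policy. Then the right-hand side of \eqref{eq:lower_bound} is the constant
\[
L^\star = \sum_{x\in\cX} \tfrac{\cR(x)}{\cZ}\cl(x).
\]
By \Cref{lemma:lower_bound}, every admissible $\PB$ (reward matching plus \Cref{pb_assumption}) has $\E[n_\tau] \ge L^\star$. By \Cref{lemma:best_pb_existence}, some admissible $\PB$ attains $L^\star$. Hence the minimum of $\E[n_\tau]$ over this class exists and equals $L^\star$, and ``$\PB$ minimizes $\E[n_\tau]$'' is equivalent to ``$\E[n_\tau] = L^\star$''. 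It remains to characterize when equality holds.

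Next we decompose the gap using the tower property \eqref{eq:trajlen}:
\[
\E[n_\tau] - L^\star = \sum_{x\in\cX} \tfrac{\cR(x)}{\cZ}\bigl(\E[n_\tau \mid s_{n_\tau}=x] - \cl(x)\bigr).
\]
Each summand is nonnegative, and each weight $\cR(x)/\cZ$ is strictly positive. So equality holds if and only if $\E[n_\tau \mid s_{n_\tau}=x] = \cl(x)$ for every $x\in\cX$. Writing the conditional expectation as a sum over trajectories gives
\[
\sum_{\tau \text{ ends at } x} \cP(\tau)\,(n_\tau - \cl(x)) = 0.
\]
This is a countable sum of nonnegative terms, since every trajectory from $s_0$ to $x$ has $n_\tau \ge \cl(x)$. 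It therefore vanishes if and only if every term vanishes, i.e.\ $\cP(\tau)=0$ whenever $n_\tau \ne \cl(x)$. The same identity read backwards gives the converse: if all non-shortest trajectories have zero probability, then each conditional expectation equals $\cl(x)$, so $\E[n_\tau] = L^\star$ is minimal.

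The main obstacle is rigor about infinite sums and conditioning. The set $\cT$ is countably infinite, and $\cP$ must be a genuine probability distribution on it, which \Cref{pb_assumption} guarantees (Appendix~\ref{app:assumpt}). Finiteness of $\E[n_\tau]$ is needed so that the subtraction in the gap decomposition is legitimate; nonnegativity of the terms then justifies the rearrangement. Conditioning on $s_{n_\tau}=x$ is well defined because $\PB(x\mid s_f)=\cR(x)/\cZ>0$. I would also note that the statement's $\PB(s\mid s_f)=\cR(s)$ should be read as $\cR(s)/\cZ$, matching \eqref{eq:rm}.
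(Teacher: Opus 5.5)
Your proposal is correct and follows the paper's proof: both combine the tower-property decomposition \eqref{eq:trajlen}, the lower bound of \Cref{lemma:lower_bound}, and the attainability of that bound from \Cref{lemma:best_pb_existence}. The paper splits the argument into two directions, using contradiction for ($\Rightarrow$), whereas you get both directions at once from the nonnegative gap decomposition; your remarks on well-defined conditioning and the countable sum make explicit what the paper leaves implicit.
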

The proof immediately follows from Lemma~\ref{lemma:lower_bound} and Lemma~\ref{lemma:best_pb_existence}; we refer to Appendix~\ref{app:proofs} for a detailed derivation.

Intuitively, we have shown that if we have a backward policy that minimizes $\E[n_\tau]$, sampling a trajectory from it means sampling a terminal state $x$ from the probability distribution $\cR(x') / \cZ$, and then sampling a shortest path between $s_0$ and $x$ in the backward direction. It is important to note that such $\PB$ is generally not unique: for any $s \in \cS \setminus \{ s_f \}$, it can assign arbitrary probabilities to parents of $s$ that lie on a shortest path between $s_0$ and $s$ (but must assign zero probability to parents that do not lie on a shortest path).

\begin{remark}
\label{remark_pf}
    Proposition 3.8 of \cite{morozov2025revisiting} states that for any backward policy $\PB$, there exists a unique forward policy $\PF$ that induces the same probability distribution over $\cT$ as $\PB$, and vice versa. Therefore, Theorem~\ref{theorem:smalles_length} can be equivalently restated for forward policies as well, since the condition of the theorem is related only to the trajectory distribution induced by the policy, not the policy itself. 
\end{remark} 

\begin{remark}
    The proof of Lemma~\ref{lemma:best_pb_existence} provided a constructive way to design a backward policy with the smallest $\E[n_\tau]$. It can also be used to constructively provide the unique corresponding forward policy. Once again, for any $s \in S \setminus \{s_0, s_f\}$ we pick an arbitrary state $\mathsf{par}(s)$ such that there is an edge $(\mathsf{par}(s) \to s) \in \cE$ and $\cl(s) = \cl(\mathsf{par}(s)) + 1$, defining $\PB(s \mid s') = \ind\{s = \mathsf{par}(s')\}$ for all $s' \in \cS \setminus \{s_f\}$. If we exclude $s_f$, the structure of $\mathsf{par}(s)$ defines an oriented tree subgraph of $\cG$ rooted in $s_0$, and $\PB$ is degenerate in essence, always choosing the unique parent of a state. In this case, there exists a simple expression for the corresponding forward policy: $\PF(s' \mid s) = V(s') / V(s)$, where $V(s)$ is the sum of GFlowNet rewards in tree leaves reachable from $s$, see \cite{bengio2021flow}.
\end{remark}

\subsection{Application to Pathfinding in General Graphs}
\label{sec:construction}

Building on the presented theory, we next show how GFlowNets can be applied to find shortest paths in general graphs. 

Let $G = (V, E)$ be an arbitrary finite ordered graph. The task is to find shortest paths from every vertex $v$ to a specified goal vertex $v_g$. We note that the pathfinding problem on unweighted graphs is considered, and assume that a path from every vertex to the goal vertex exists.

The core idea of our approach is to slightly modify the graph $G$ to define such a non-acyclic GFlowNet environment, where $\PB$ with the smallest $\E[n_\tau]$ solves the pathfinding task of interest.

First, we remove all edges outgoing from $v_g$ in $G$. This does not change the task since no such edge lies on a shortest path to $v_g$. Then, we define a non-acyclic GFlowNet environment $\cG = (\cS, \cE)$, where
\begin{enumerate}
    \item States $\cS$ correspond to vertices in $G$, with an addition of a special sink state $s_f$. Initial state specifically $s_0$ corresponds to $v_g$;
    \item Transitions $\cE$ correspond to reversed edges from $G$. In addition, a transition from each state to $s_f$ is added (with an exception of $s_f$ itself).
\end{enumerate}

This is a valid GFlowNet environment that satisfies Assumption~\ref{graph_assumption}. Firstly, $s_0$ has no incoming edges and $s_f$ has no outgoing edges. There is a also path from each state to $s_f$ since each state has a transition to $s_f$, also meaning that all states except $s_f$ are terminal. Finally, there is a path from $s_0$ to each state $s$ because there is a path from each vertex $v$ to $v_g$ in $G$, and transitions in $\cG$ are reversed edges from $G$.

Let $\cR$ be any positive reward function defined on $\cS \setminus \{ s_f \}$. Then, by Theorem~\ref{theorem:smalles_length}, a backward policy with the smallest $\E[n_\tau]$ that satisfies the reward matching condition with respect to $\cR$, will assign non-zero probabilities only to shortest paths between $s_0$ and terminal states, where the former coincides with $v_g$, and the latter coincide with all vertices in the graph $G$. Since $\PB$ traverses $\cG$ in reverse direction, and transitions in $\cG$ are reversed edges from $G$, starting in arbitrary state $s \in \cS \setminus \{ s_f \}$ and sampling a trajectory via $\PB$ will result in a shortest path from the vertex $v$ (corresponding to $s$) to $v_g$ (corresponding to $s_0$).

Therefore, training a non-acyclic GFlowNet that minimizes its expected trajectory length will result in a model that solves the pathfinding task in the graph $G$, specifically via the obtained backward policy $\PB$. Figure~\ref{fig:main_fig} visualizes the overall construction. 

\paragraph{Choice of rewards.} While the presented construction allows for any positive reward function to be used, the most natural task-agnostic choice is a uniform reward distribution: $\cR(s) = 1$. Additionally, the optimal $\E[n_\tau]$ in this case coincides with the average length of a shortest path to the goal state $\frac{1}{|V|}\sum_{v \in V} \cl(v)$. We follow this design choice in the training algorithm discussed below and in all our experiments. 

\paragraph{Forward policy.} The aim of the backward policy in the presented construction is to find shortest paths to the goal state $s_0$. The forward policy, however, is an auxiliary component needed for training and serves a different purpose. It starts in the goal state and traverses the graph in the other direction, aiming to sample states from the specified reward distribution, which is uniform in our case. It also has to assign a probability to stop in each state, corresponding to the transition to the sink state $\PF(s_f \mid s)$. For example, in the case of Rubik's Cubes, $\PB$ has to solve any possible configuration in the smallest possible number of moves, while $\PF$ has to scramble a solved cube in the smallest possible number of moves (including deciding when to stop scrambling), obtaining a uniformly distributed configuration. 

\begin{figure}[t!]
    \centering
    \includegraphics[width=0.99\linewidth]{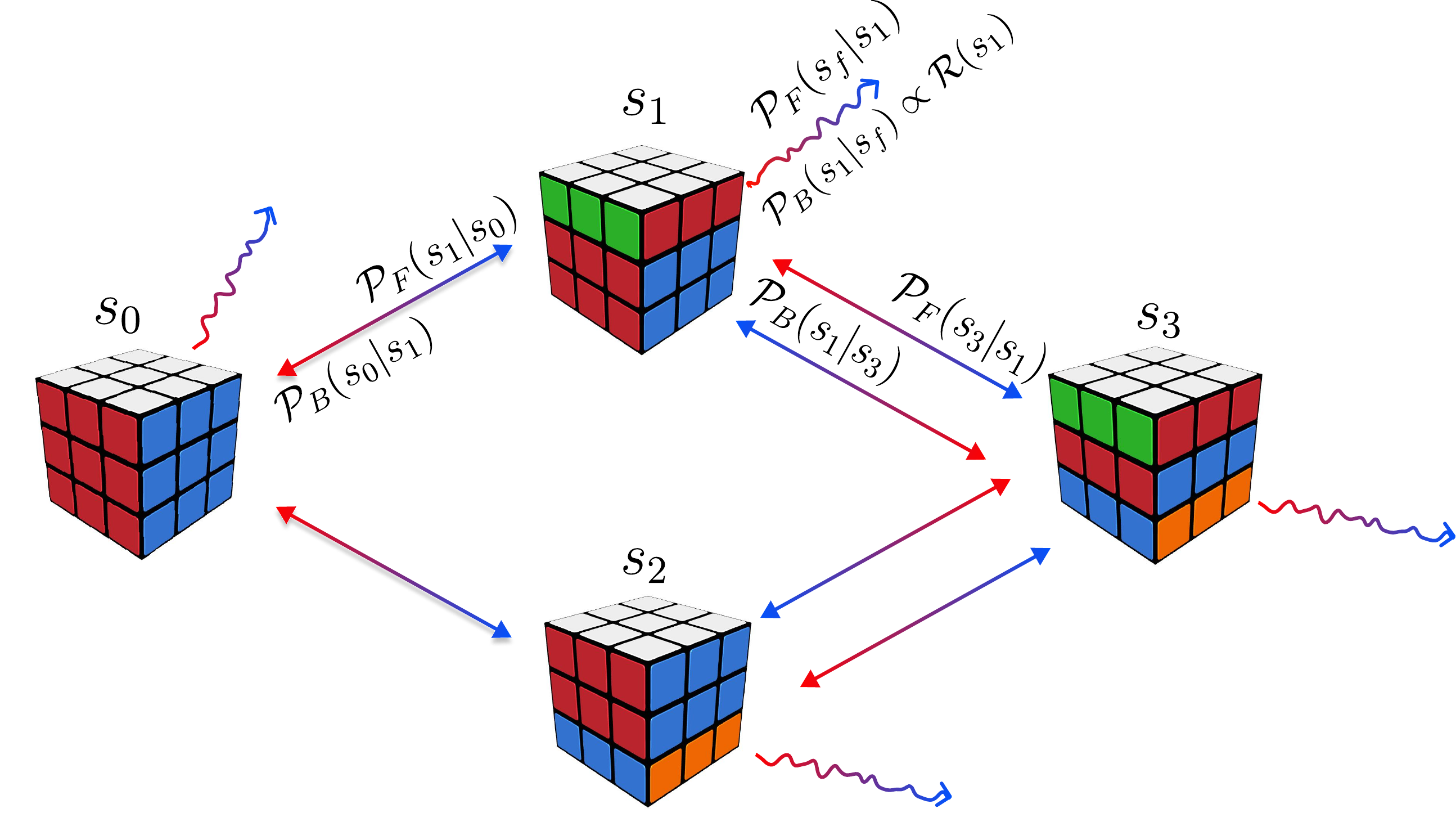}
    \caption{Visualization of the GFlowNet environment in the proposed construction using Rubik's Cube as an example. Forward transitions are denoted by blue arrows, backward transition are denoted by red arrows. Note that backward transitions from the goal state $s_0$ are removed in our construction — it is an absorbing state with respect to $\PB$, while there are forward and backward transitions in both directions between $s_1$ and $s_3$ (since the Cayley graph of Rubik's Cube has transitions between them in both directions too). Each state has a forward transition to $s_f$, corresponding to the stop action, while $\PB(s \mid s_f)$ corresponds to the reward distribution, which we set as uniform.}\label{fig:main_fig}
\end{figure}

\subsection{Training Algorithm}

\subsubsection{Considerations from Previous Literature}

\cite{morozov2025revisiting} trained non-acyclic GFlowNets on-policy, sampling a batch of trajectories from $\PF$ at each training step to compute the loss. The authors utilized detailed balance loss~\citep{bengio2023gflownet} with state-flow regularization (see Section~\ref{sec:gflow_background}). We ran into two crucial problems in an attempt to apply this approach to train our models for pathfinding. 

Firstly, as observed by~\cite{morozov2025revisiting}, expected trajectory length $\E[n_\tau]$ of the model at early stages of training can be very high, and sampling a full trajectory from the model introduces a training cost that can be prohibitive in case of larger environments. To alleviate this cost, we bound the length of trajectories that are sampled during training. This design choice was also employed in~\cite{brunswic2024theory}. 

Secondly, in our early experiments, we observed very slow convergence of training with detailed balance loss. Instead, we opt for using a modification of trajectory balance loss~\citep{malkin2022trajectory} to train the model, which produced significantly better results in our case. We hypothesize that this is due to the fact that trajectory balance loss was shown to have more efficient credit assignment than detailed balance~\citep{malkin2022trajectory}. In addition, one can note that the main training signal for our model comes from the goal state itself. Detailed balance is defined on individual transitions, most of which do not contain the goal state, whereas trajectory balance is defined on whole trajectories, each containing the goal state by design. We leave the examination of this phenomenon for further study as an interesting research direction.

\subsubsection{Trajectory Balance with Flow Regularization}

In detail, our approach is as follows. The model parameterizes and learns forward and backward policies: $\PF(s' \mid s, \theta)$ and $\PB(s \mid s', \theta)$. At each step, we generate a batch of partial trajectories of a fixed length $N_{\text{max}}$ from the forward policy $\PF$ starting in $s_0$. To ensure that the trajectory does not terminate earlier, we simply mask the stop action $\PF(s_f \mid s, \theta)$ during sampling. Denote such a partial trajectory as $\tau' = (s_0 \to s_1 \to ... \to s_{N_{\text{max}}})$. Since each state in the environment is terminal (see Section~\ref{sec:construction}), each prefix $\tau'_{0:i}$ with an addition of a terminal transition $(s_i \to s_f)$ can be viewed as a complete trajectory $(s_0 \to s_1 \to ... \to s_i \to s_f) \in \cT$. We then compute trajectory balance loss \citep{malkin2022trajectory} for all prefixes and take the sum:
\begin{equation}
\label{eq:tb_loss}
\sum\limits_{i = 0}^{N_{\text{max}}} \left( \log \frac{\PF(s_f \mid s_i, \theta)  \prod_{t=1}^i  \PF(s_t \mid s_{t - 1}, \theta)}{(\cR(s_i)/\cZ) \prod_{t=1}^i  \PB(s_{t - 1} \mid s_{t}, \theta)} \right)^2\,.
\end{equation}

The motivation for this design choice is that considering the balance conditions on all prefixes instead of only the whole trajectory produces more information about the sampled trajectory, resulting in a more sample-efficient approach. In general, the normalizing constant is unknown in GFlowNets, and a learnable scalar $\log \cZ_\theta$ is plugged into the loss. However, in the case of our reward choice $\cR(s) = 1$, it simply corresponds to the number of vertices in the graph $\cZ = |V|$, thus does not need to be learned.

To add state flow regularization~\citep{brunswic2024theory, morozov2025revisiting} to the loss, one can note that in environments where there is a transition to $s_f$ from every other state, which is the case in our construction, the flow can be expressed as $\cF(s) = \cR(s) / \PF(s_f \mid s)$, and does not need to be learned as an additional component, see, e.g., \cite{deleu2022bayesian}. Theoretically, this parameterization can be obtained by combining the conditions on terminal edge flows $\cF(s \to s_f) = \cR(s)$ (Proposition 3.10 of \cite{morozov2025revisiting}) and a part of detailed balance conditions $\cF(s \to s_f) = \cF(s) \PF(s_f \mid s)$ (Proposition 3.8 of \cite{morozov2025revisiting}).

Finally, combining the choice of uniform rewards and the parameterization of the flows discussed above, we obtain the following regularized training objective:

\begin{align} \label{eq:reg_tb_loss}
    \mathcal{L}_{\mathtt{regTB}}(\theta, \tau) = \sum\limits_{i = 0}^{N_{\text{max}}}\left\{ \cL_{\mathtt{TB}}(\theta, \tau_{0:i}) +  \frac{\lambda }{\PF(s_f \mid s_i, \theta)}\right\}\,,
\end{align}
where
\begin{align*}
    \cL_{\mathtt{TB}}(\theta, \tau_{0:i}) = \bigg( \log \frac{\PF(s_f | s_i, \theta) \prod_{t=1}^i  \PF(s_t | s_{t - 1}, \theta)}{(1/|V|)  \prod_{t=1}^i  \PB(s_{t - 1} | s_{t}, \theta)} \bigg)^2\,.
\end{align*}

The full training procedure is summarized in Algorithm~\ref{alg:train}. An important note here is that optimization with respect to both $\PF$ and $\PB$ is a crucial part of our algorithm. Specialized methods for optimizing backward policies exist in GFlowNet literature~\citep{jang2024pessimistic, gritsaev2024optimizing}, and can be potentially applied to further improve training. 

\vspace{-0.1cm}
\subsection{Beam Search}
\vspace{-0.1cm}

While in theory, the optimal backward policy in our approach finds shortest paths exactly, the learned policy will only provide approximate solutions in case of larger graphs in practice, as further demonstrated in our experimental evaluation in Section~\ref{sec:experiments}. Following~\cite{chervov2025rubik}, we use beam search as a heuristic to improve the solution at test-time. 

Let $W$ be the width of beam search. At each step of beam search, we consider all ways to continue each of the currently stored trajectories, and keep top $W$ candidates based on the logarithm of the product of backward transition probabilities along the trajectory. The search stops when the goal state $s_0$ is reached. The resulting algorithm is very similar to the one widely applied in text generation models~\citep{sutskever2014sequence, meister2020if}, and can be viewed as an approximate way to find a trajectory with the highest probability assigned by the model. We also found an additional heuristic proposed in~\cite{chervov2025rubik} to slightly improve the results, consisting of dropping duplicate states from the current batch at each step of beam search, and utilize it in our experiments. 

A corner case of the algorithm when beam width $W$ is set to 1 corresponds to the greedy evaluation of the learned backward policy: instead of faithfully sampling a trajectory from $\PB$, at each step one chooses a transition with the highest probability $\operatorname{argmax}_s \PB(s \mid s', \theta)$. If the learned policy is optimal, this will still produce shortest paths, as zero probability must be assigned to all transitions that do not lie on a shortest path (Theorem~\ref{theorem:smalles_length}).

\begin{algorithm}[t!]
   \caption{Training a GFlowNet for pathfinding with regularized trajectory balance objective}
   \label{alg:train}
\begin{algorithmic}[1]
   \STATE {\bfseries Input:} Model and optimizer, regularization coefficient $\lambda$, maximum length of training trajectories $N_{\text{max}}$, batch size $B$
   \REPEAT
   \STATE Sample a batch of trajectories $\tau^{1:B}$ of length up to $N_{\text{max}}$ from the policy $\PF(s' \mid s, \theta)$ starting in $s_0$
   \STATE Compute $\frac{1}{B}\sum_{i = 1}^B \nabla_\theta \mathcal{L}_{\mathtt{regTB}}(\theta, \tau^i)$  and update model parameters $\theta$
   \UNTIL{convergence}
\end{algorithmic}
\label{gfn_ments}
\end{algorithm}

It is worth noting that other search algorithms can be potentially applied here, e.g., \cite{morozov2024improving} showed that entropy-regularized Monte Carlo Tree Search~\citep{xiao2019maximum} can be applied to GFlowNets at test-time, posing as an interesting direction for future research.

\vspace{-0.1cm}
\section{Experiments}\label{sec:experiments}
\vspace{-0.1cm}

We conduct an experimental evaluation on a synthetic Swap task, as well as on 2x2x2 and 3x3x3 Rubik's Cubes. Our neural network architecture is very similar to the one employed in~\cite{chervov2025rubik}: MLP with residual connections and layer normalization. In each task, states correspond to permutations of a fixed size, and a one-hot encoding is provided as input to the neural network. For comprehensive experimental details, we refer the reader to Appendix~\ref{app:exp_details}.

\subsection{Swap Puzzle}\label{sec:exp_perms}

\begin{figure}[!t]

    \centering
    \includegraphics[width=0.95\linewidth]{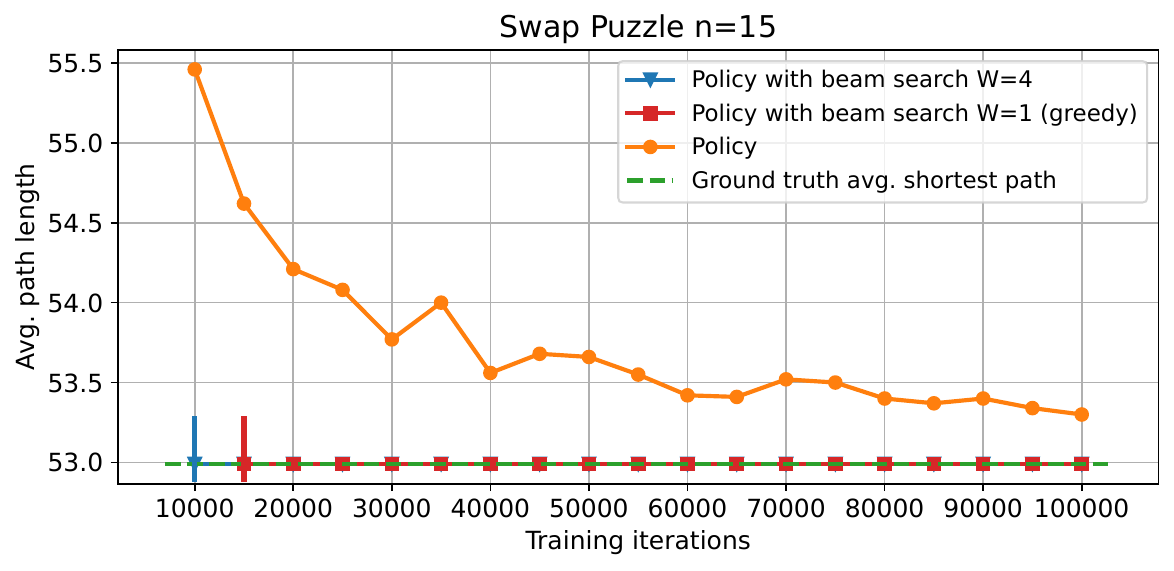}
    \includegraphics[width=0.95\linewidth]{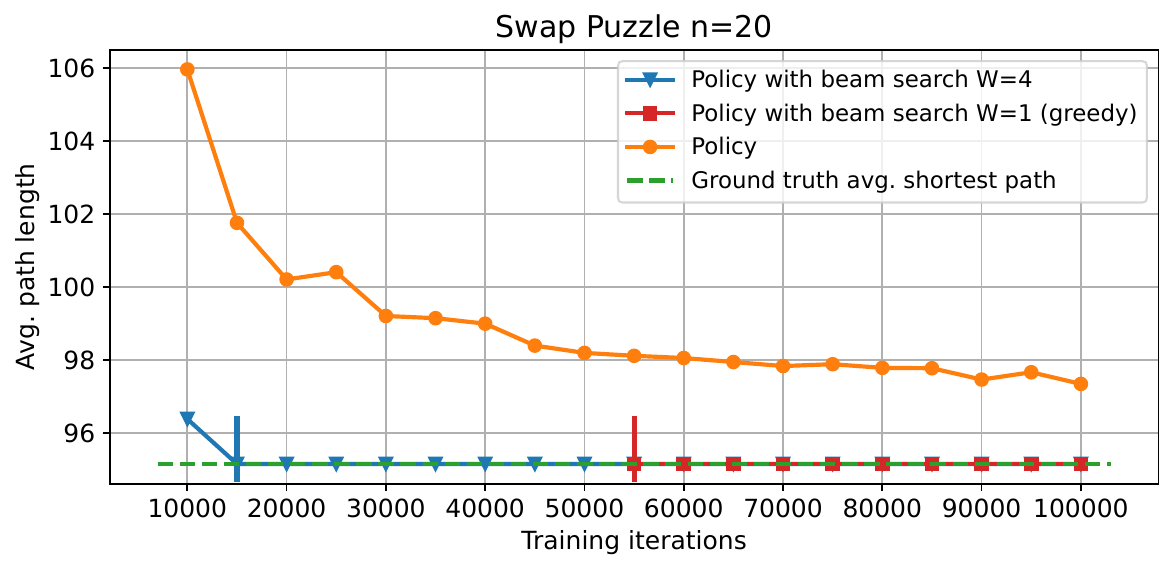}

    \caption{The results for Swap Puzzle $n=15$ and $n=20$ for three evaluation protocols: 1) the path is faithfully sampled from $\PB$, 2) the path is obtained by a greedy evaluation of $\PB$, 3) the path is generated with beam search of a small width $W=4$. The results for greedy evaluation are not presented for checkpoints where it failed to find a valid solution for every permutation in the test set. For $n=20$, 100k training iterations took 15 minutes to run on an NVIDIA H200 GPU.} 
    \label{fig:permsort_main}
    
\end{figure}

To evaluate the capability of the proposed approach to find shortest paths, as well as the ability generalize to unseen states in large graphs, we first consider a synthetic environment, which is a variation of the Swap Puzzle described in~\cite{mulholland2016permutation}.

Suppose that you are given an arbitrary permutation of $n$ elements, and your task is to sort it using the smallest possible number of moves, where at each step you are allowed to swap any pair of adjacent elements. The task corresponds to pathfinding in the Cayley graph of the Symmetric group $\mathrm{S}_n$ with the generating set consisting of transpositions of adjacent elements. The solution is rather straightforward: at each step, one must swap any pair of adjacent elements that are in descending order. This strategy is optimal due to two facts: 1) the identity permutation (goal state) is the only permutation that has $0$ inversions, and 2) swapping a pair of adjacent elements can either decrease the number of inversions by one (if they are in descending order) or increase it by one (if they are in ascending order). Thus, the length of the solution for any permutation $s$ corresponds to the number of inversions in it. The goal is to verify that our model successfully learns the optimal strategy.

Figure~\ref{fig:permsort_main} presents the evaluation results for $n=15$ and $n=20$, corresponding to Cayley graphs with $\approx 1.3 \cdot 10^{12}$ and $\approx 2.4 \cdot 10^{18}$ states respectively. For each task, we fix a test set of $500$ uniformly sampled permutations, and report the average length of the path found by the model over the course of training. We plot the results for three evaluation protocols 1) the path is faithfully sampled from $\PB$, 2) the path is obtained by a greedy evaluation of $\PB$, 3) the path is generated with beam search of a small width $W=4$. 

We observe that after sufficient training, both greedy and beam search evaluation protocols produce exact shortest paths for every permutation in the test set (with the beam search protocol requiring less training to produce optimal solutions), and the policy itself produces solutions that are close to optimal on average. It is important to note that, e.g., in the case of $n=20$, the model in Figure~\ref{fig:permsort_main} has seen only $10^9$ out of overall $2.4 \cdot 10^{18}$ states during training, which demonstrates its generalization capabilities.

\begin{figure}[!t]

    \centering
    \includegraphics[width=0.95\linewidth]{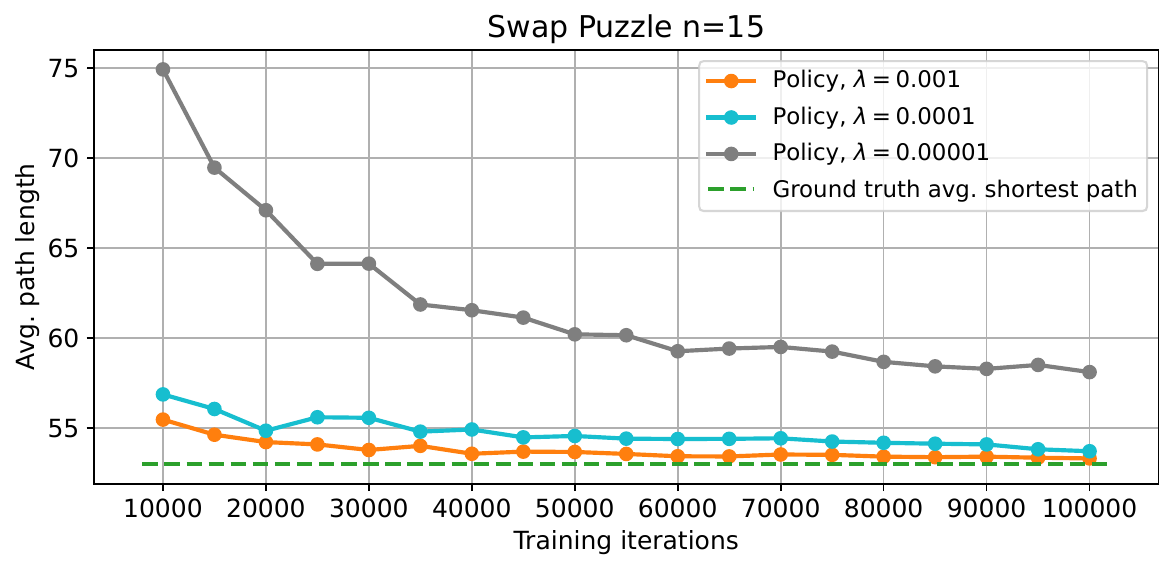}
    \includegraphics[width=0.95\linewidth]{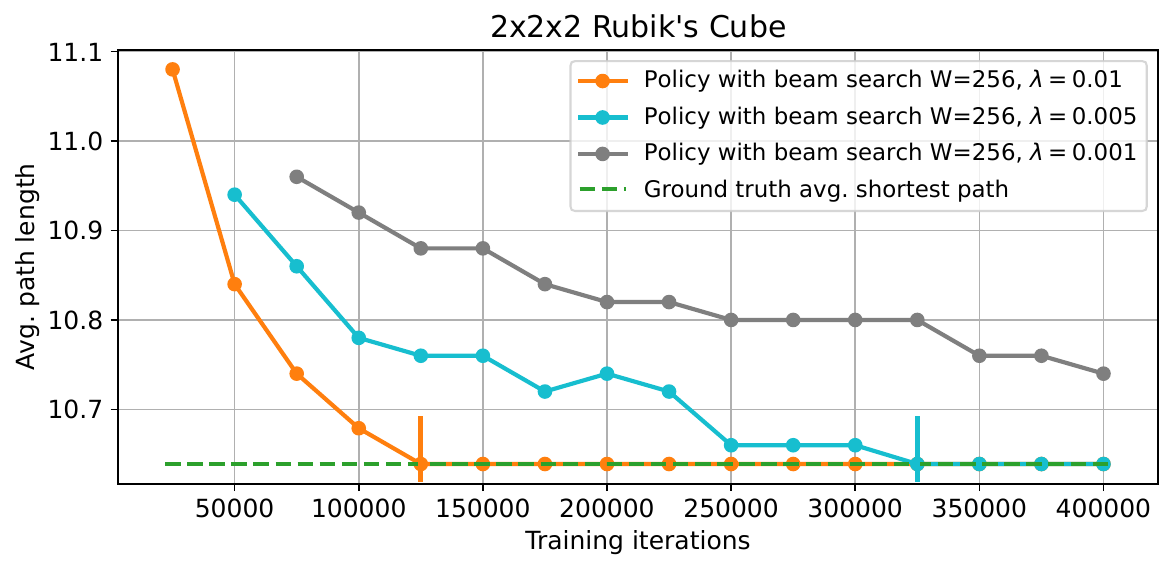}

    \caption{Effect of the regularization coefficient $\lambda$ on the obtained solution lengths on Swap Puzzle $n=15$ and 2x2x2 Rubik'c sube. In both cases, choosing a larger $\lambda$ led to models that fail to find any valid paths.} 
    \label{fig:reg_abl}
    
\end{figure}

{\renewcommand{\arraystretch}{1.0}
\setlength{\tabcolsep}{3pt}
\begin{table*}[t]
  \centering
  \caption{Comparison with CayleyPy Cube~\citep{chervov2025rubik}. We use the test set from~\citep{chervov2025rubik} containing 100 examples for 2x2x2 Rubik's Cube and the test set from~\citep{agostinelli2019solving} containing 1000 examples for 3x3x3 Rubik's Cube, presenting average lengths of solutions found by both methods for different beam search budgets. Solve rate corresponds to the fraction of examples in the test set for which the model has found a valid path to the goal within 100 steps. For the 2x2x2 Rubik's Cube, the average length of 10.64 is optimal for the test set, validated by the BFS algorithm.}
  \label{cube_table}
  \setlength{\tabcolsep}{5pt}
  \renewcommand{\arraystretch}{1.15}

  \begin{tabular}{@{}lcccc@{\qquad}lcccc@{}}
    \toprule
    & \multicolumn{4}{c}{\textbf{2x2x2 Rubik's Cube}} && 
      \multicolumn{4}{c}{\textbf{3x3x3 Rubik's Cube}} \\
    \cmidrule(lr){2-5}\cmidrule(lr){7-10}

    & \multicolumn{2}{c}{\textbf{Ours}} & \multicolumn{2}{c}{\textbf{CayleyPy Cube}} &&
      \multicolumn{2}{c}{\textbf{Ours}} & \multicolumn{2}{c}{\textbf{CayleyPy Cube}} \\
    \cmidrule(lr){2-3}\cmidrule(lr){4-5}\cmidrule(lr){7-8}\cmidrule(lr){9-10}

    \makecell[l]{Beam\\size}
      & \multicolumn{1}{c}{\makecell{Avg.\\length}}
      & \multicolumn{1}{c}{\makecell{Solve\\rate}}
      & \multicolumn{1}{c}{\makecell{Avg.\\length}}
      & \multicolumn{1}{c}{\makecell{Solve\\rate}}
    & \makecell[l]{Beam\\size}
      & \multicolumn{1}{c}{\makecell{Avg.\\length}}
      & \multicolumn{1}{c}{\makecell{Solve\\rate}}
      & \multicolumn{1}{c}{\makecell{Avg.\\length}}
      & \multicolumn{1}{c}{\makecell{Solve\\rate}} \\
    \midrule

    $W=2^{0}$  & \textbf{11.62} & \textbf{1.0} & x & 0.00 & $W=2^{0}$  & x & \textbf{0.471} & x & 0.000 \\
    $W=2^{2}$  & \textbf{11.24} & \textbf{1.0} & x & 0.00 & $W=2^{3}$  & x & \textbf{0.984} & x & 0.001 \\
    $W=2^{4}$  & \textbf{10.78} & \textbf{1.0} & x & 0.00 & $W=2^{6}$  & \textbf{25.33} & \textbf{1.0} & x & 0.687 \\
    $W=2^{6}$  & \textbf{10.64} & \textbf{1.0} & x & 0.06 & $W=2^{9}$  & \textbf{23.49} & \textbf{1.0} & 24.34 & {1.0} \\
    $W=2^{8}$  & \textbf{10.64} & \textbf{1.0} & x & 0.89 & $W=2^{12}$ & \textbf{22.42} & \textbf{1.0} & 22.44 & {1.0} \\
    $W=2^{10}$ & \textbf{10.64} & \textbf{1.0} & \textbf{10.64} & \textbf{1.0} & $W=2^{15}$ & 21.70 & 1.0 & \textbf{21.61} & \textbf{1.0} \\
    $W=2^{12}$ & \textbf{10.64} & \textbf{1.0} & \textbf{10.64} & \textbf{1.0} & $W=2^{18}$ & 21.24 & 1.0 & \textbf{21.15} & \textbf{1.0} \\

    \bottomrule
  \end{tabular}
\end{table*}

\vspace{-0.1cm}
\subsection{Rubik's Cubes}\label{sec:exp_rubik}

Next, we evaluate our approach on 2x2x2 and 3x3x3 Rubik's Cubes against CayleyPy Cube~\citep{chervov2025rubik}. It is a state-of-the-art machine learning approach designed specifically for solving Rubik's Cubes and similar puzzles, which was shown to outperform previous methods like DeepCubeA~\citep{agostinelli2019solving} and EfficientCube~\citep{takano2023selfsupervision} in both solution lengths and runtime efficiency. To ensure a fair comparison, for CayleyPy Cube, we train a model with neural network hyperparameters specified in the original work~\citep{chervov2025rubik} and a model with the same neural network size as ours, picking the best of the two. For both puzzles, we allow only 90-degree face turns, thus the presented solution lengths are in QTM – quarter-turn metric. We use the same way to encode states and transitions as permutations as in~\cite{chervov2025rubik}.

Both methods utilize beam search to look for the shortest paths once the model is trained, and we provide the comparison across different beam search widths $W$ in Table~\ref{cube_table}. For the 2x2x2 Rubik's Cube, our approach requires a beam search width 16 times smaller to find optimal solutions, as well as provides valid solutions for the entire test set even with greedy evaluation ($W=1$), whereas CayleyPy cube fails to find any valid paths for smaller beam search widths. For the 3x3x3 Rubik's Cube, our model also shows better performance across smaller beam search widths from $1$ to $2^9$, while showing comparable results for larger values of $W=\{2^{12},\; 2^{15},\; 2^{18}\}$.

In addition, we evaluate the runtime efficiency of both models on 3x3x3 cubes with a large beam search budget of $W=2^{18}$, which produces the best results in Table~\ref{cube_table}. When run on a single NVIDIA H200 GPU, our model takes $1.74$ seconds on average to solve a single Rubik's Cube configuration with a 25M-parameter neural network, while CayleyPy cube takes $6.19$ seconds on average with a 4M-parameter neural network. The main reason for speedup even when a larger neural network is used is the fact that CayleyPy Cube and similar approaches like DeepCubeA~\citep{agostinelli2019solving} require $12$ times more neural network evaluations (here $12$ is the number of neighbours of each state) with the same beam width. Indeed, mentioned approaches have to run a forward pass for every neighbor of a state to estimate values or distances, while our model outputs backward policy logits corresponding to every neighbor with a single forward pass. 

\subsection{Ablation of Regularization Coefficient}\label{sec:exp_reg}

Finally, we discuss the role of the regularization coefficient $\lambda$ (see Equation~\eqref{eq:reg_tb_loss}), which is one of the main hyperparameters in non-acyclic GFlowNet training with flow regularization. In our experiments, we observed that models perform better when trained with a larger value of $\lambda$, but can fail completely when $\lambda$ is too large. We show the effect of $\lambda$ on the results in Figure~\ref{fig:reg_abl}. Because of this, a simple rule of thumb can be used to quickly tune $\lambda$ in our approach: evaluate the model after a very small number of training iterations, and choose the largest $\lambda$ that produces a model that is able find valid paths to the goal state.

\section{Conclusion}\label{sec:conclusion}

In this work, we established a direct theoretical connection between non-acyclic GFlowNets and pathfinding. Our main result shows that minimizing expected trajectory length is equivalent to concentrating probability mass exclusively on shortest paths. This provides a new interpretation of flow minimization in non-acyclic GFlowNets and reframes shortest-path optimality in probabilistic terms. Based on this result, we show how non-acyclic GFlowNets with flow regularization can be used to train a stochastic policy to find shortest paths in an arbitrary unweighted graph. Our experimental evaluation, including comparison with a strong baseline on Rubik's Cubes, demonstrates strong capabilities of the presented approach in pathfinding problems on Cayley graphs.

Overall, our results position non-acyclic GFlowNets as a principled and general framework for shortest-path learning in discrete environments. Further work could extend the framework to weighted graphs and cost-sensitive settings, as well as explore its scalability to extremely large graphs and applications in domains beyond Cayley graphs.

\begin{acknowledgements} 
This research was supported in part through computational resources of HPC facilities at HSE University~\citep{kostenetskiy2021hpc}.
\end{acknowledgements}

\bibliography{bibliography}

@inproceedings{gritsaev2024optimizing,
  title={Optimizing Backward Policies in {GF}lowNets via Trajectory Likelihood Maximization},
  author={Gritsaev, Timofei and Morozov, Nikita and Samsonov, Sergey and Tiapkin, Daniil},
  booktitle={The Thirteenth International Conference on Learning Representations},
  year={2025}
}

@article{bengio2021flow,
  title={Flow network based generative models for non-iterative diverse candidate generation},
  author={Bengio, Emmanuel and Jain, Moksh and Korablyov, Maksym and Precup, Doina and Bengio, Yoshua},
  journal={Advances in Neural Information Processing Systems},
  volume={34},
  pages={27381--27394},
  year={2021}
}

@inproceedings{kostenetskiy2021hpc,
  title={HPC resources of the higher school of economics},
  author={Kostenetskiy, PS and Chulkevich, RA and Kozyrev, VI},
  booktitle={Journal of Physics: Conference Series},
  volume={1740},
  number={1},
  pages={012050},
  year={2021},
  organization={IOP Publishing}
}

@article{xiao2019maximum,
  title={Maximum entropy monte-carlo planning},
  author={Xiao, Chenjun and Huang, Ruitong and Mei, Jincheng and Schuurmans, Dale and M{\"u}ller, Martin},
  journal={Advances in Neural Information Processing Systems},
  volume={32},
  year={2019}
}

@inproceedings{madan2023learning,
  title={Learning GFlowNets from partial episodes for improved convergence and stability},
  author={Madan, Kanika and Rector-Brooks, Jarrid and Korablyov, Maksym and Bengio, Emmanuel and Jain, Moksh and Nica, Andrei Cristian and Bosc, Tom and Bengio, Yoshua and Malkin, Nikolay},
  booktitle={International Conference on Machine Learning},
  pages={23467--23483},
  year={2023},
  organization={PMLR}
}

@article{malkin2022trajectory,
  title={Trajectory balance: Improved credit assignment in GFlowNets},
  author={Malkin, Nikolay and Jain, Moksh and Bengio, Emmanuel and Sun, Chen and Bengio, Yoshua},
  journal={Advances in Neural Information Processing Systems},
  volume={35},
  pages={5955--5967},
  year={2022}
}

@inproceedings{tiapkin2024generative,
  title={Generative flow networks as entropy-regularized rl},
  author={Tiapkin, Daniil and Morozov, Nikita and Naumov, Alexey and Vetrov, Dmitry P},
  booktitle={International Conference on Artificial Intelligence and Statistics},
  pages={4213--4221},
  year={2024},
  organization={PMLR}
}

@article{bengio2023gflownet,
  title={Gflownet foundations},
  author={Bengio, Yoshua and Lahlou, Salem and Deleu, Tristan and Hu, Edward J and Tiwari, Mo and Bengio, Emmanuel},
  journal={Journal of Machine Learning Research},
  volume={24},
  number={210},
  pages={1--55},
  year={2023}
}

@inproceedings{hu2023amortizing,
  title={Amortizing intractable inference in large language models},
  author={Hu, Edward J and Jain, Moksh and Elmoznino, Eric and Kaddar, Younesse and Lajoie, Guillaume and Bengio, Yoshua and Malkin, Nikolay},
  booktitle={The Twelfth International Conference on Learning Representations},
  year={2023}
}

@inproceedings{zhang2023solving,
 author = {Zhang, Dinghuai and Dai, Hanjun and Malkin, Nikolay and Courville, Aaron C and Bengio, Yoshua and Pan, Ling},
 booktitle = {Advances in Neural Information Processing Systems},
 pages = {11952--11969},
 title = {Let the Flows Tell:  Solving Graph Combinatorial Problems with GFlowNets},
 volume = {36},
 year = {2023}
}

@article{atanackovic2024dyngfn,
  title={DynGFN: Towards Bayesian Inference of Gene Regulatory Networks with GFlowNets},
  author={Atanackovic, Lazar and Tong, Alexander and Wang, Bo and Lee, Leo J and Bengio, Yoshua and Hartford, Jason S},
  journal={Advances in Neural Information Processing Systems},
  volume={36},
  year={2024}
}

@inproceedings{coulom2006efficient,
  title={Efficient selectivity and backup operators in Monte-Carlo tree search},
  author={Coulom, R{\'e}mi},
  booktitle={International conference on computers and games},
  pages={72--83},
  year={2006},
  organization={Springer}
}

@inproceedings{
malkin2022gflownets,
title={{GF}lowNets and variational inference},
author={Nikolay Malkin and Salem Lahlou and Tristan Deleu and Xu Ji and Edward J Hu and Katie E Everett and Dinghuai Zhang and Yoshua Bengio},
booktitle={The Eleventh International Conference on Learning Representations },
year={2023}
}

@inproceedings{deleu2022bayesian,
  title={Bayesian structure learning with generative flow networks},
  author={Deleu, Tristan and G{\'o}is, Ant{\'o}nio and Emezue, Chris and Rankawat, Mansi and Lacoste-Julien, Simon and Bauer, Stefan and Bengio, Yoshua},
  booktitle={Uncertainty in Artificial Intelligence},
  pages={518--528},
  year={2022},
  organization={PMLR}
}

@article{
zimmermann2022variational,
title={A Variational Perspective on Generative Flow Networks},
author={Heiko Zimmermann and Fredrik Lindsten and Jan-Willem van de Meent and Christian A Naesseth},
journal={Transactions on Machine Learning Research},
issn={2835-8856},
year={2023}
}

@inproceedings{loshchilov2017decoupled,
  title={Decoupled Weight Decay Regularization},
  author={Loshchilov, Ilya and Hutter, Frank},
  booktitle={International Conference on Learning Representations (ICLR)},
  year={2019}
}

@inproceedings{
deleu2024discrete,
title={Discrete Probabilistic Inference as Control in Multi-path Environments},
author={Tristan Deleu and Padideh Nouri and Nikolay Malkin and Doina Precup and Yoshua Bengio},
booktitle={The 40th Conference on Uncertainty in Artificial Intelligence},
year={2024}
}

@inproceedings{morozov2024improving,
  title={Improving GFlowNets with Monte Carlo Tree Search},
  author={Morozov, Nikita and Tiapkin, Daniil and Samsonov, Sergey and Naumov, Alexey and Vetrov, Dmitry},
  booktitle={ICML 2024 Workshop on Structured Probabilistic Inference $\&$ Generative Modeling},
  year={2024}
}

@inproceedings{jang2024pessimistic,
 author = {Jang, Hyosoon and Jang, Yunhui and Kim, Minsu and Park, Jinkyoo and Ahn, Sungsoo},
 booktitle = {Advances in Neural Information Processing Systems},
 pages = {107087--107111},
 title = {Pessimistic Backward Policy for {GF}low{N}ets},
 volume = {37},
 year = {2024}
}

@inproceedings{brunswic2024theory,
  title={A Theory of Non-acyclic Generative Flow Networks},
  author={Brunswic, Leo and Li, Yinchuan and Xu, Yushun and Feng, Yijun and Jui, Shangling and Ma, Lizhuang},
  booktitle={Proceedings of the AAAI Conference on Artificial Intelligence},
  volume={38},
  pages={11124--11131},
  year={2024}
}

@book{kemeny1969finite,
  title={Finite markov chains},
  author={Kemeny, John G and Snell, J Laurie},
  volume={26},
  year={1969},
  publisher={van Nostrand Princeton, NJ}
}

@inproceedings{
    cretu2025synflownet,
    title={{SynFlowNet}: Design of Diverse and Novel Molecules with Synthesis Constraints},
    author={Miruna Cretu and Charles Harris and Ilia Igashov and Arne Schneuing and Marwin Segler and Bruno Correia and Julien Roy and Emmanuel Bengio and Pietro Lio},
    booktitle={The Thirteenth International Conference on Learning Representations},
    year={2025}
}

@inproceedings{kim2025ant,
  title={Ant Colony Sampling with GFlowNets for Combinatorial Optimization},
  author={Kim, Minsu and Choi, Sanghyeok and Kim, Hyeonah and Son, Jiwoo and Park, Jinkyoo and Bengio, Yoshua},
  booktitle={International Conference on Artificial Intelligence and Statistics},
  pages={469--477},
  year={2025},
  organization={PMLR}
}

@article{deleu2023joint,
  title={Joint Bayesian inference of graphical structure and parameters with a single generative flow network},
  author={Deleu, Tristan and Nishikawa-Toomey, Mizu and Subramanian, Jithendaraa and Malkin, Nikolay and Charlin, Laurent and Bengio, Yoshua},
  journal={Advances in neural information processing systems},
  volume={36},
  pages={31204--31231},
  year={2023}
}

@inproceedings{cooperman1990applications,
  title={Applications of cayley graphs},
  author={Cooperman, Gene and Finkelstein, Larry and Sarawagi, Namita},
  booktitle={International symposium on applied algebra, algebraic algorithms, and error-correcting codes},
  pages={367--378},
  year={1990},
  organization={Springer}
}

@article{madkour2017survey,
  title={A survey of shortest-path algorithms},
  author={Madkour, Amgad and Aref, Walid G and Rehman, Faizan Ur and Rahman, Mohamed Abdur and Basalamah, Saleh},
  journal={arXiv preprint arXiv:1705.02044},
  year={2017}
}

@article{karur2021survey,
  title={A survey of path planning algorithms for mobile robots},
  author={Karur, Karthik and Sharma, Nitin and Dharmatti, Chinmay and Siegel, Joshua E},
  journal={Vehicles},
  volume={3},
  number={3},
  pages={448--468},
  year={2021},
  publisher={MDPI}
}

@article{gallo1988shortest,
  title={Shortest path algorithms},
  author={Gallo, Giorgio and Pallottino, Stefano},
  journal={Annals of operations research},
  volume={13},
  number={1},
  pages={1--79},
  year={1988},
  publisher={Springer}
}

@inproceedings{manta2023gflownets,
  title={GFlowNets for causal discovery: an overview},
  author={Manta, Dragos Cristian and Hu, Edward J and Bengio, Yoshua},
  booktitle={ICML 2023 Workshop on Structured Probabilistic Inference $\&$ Generative Modeling},
  year={2023}
}

@article{koziarski2024rgfn,
  title={Rgfn: Synthesizable molecular generation using gflownets},
  author={Koziarski, Micha{\l} and Rekesh, Andrei and Shevchuk, Dmytro and van der Sloot, Almer and Gai{\'n}ski, Piotr and Bengio, Yoshua and Liu, Chenghao and Tyers, Mike and Batey, Robert},
  journal={Advances in Neural Information Processing Systems},
  volume={37},
  pages={46908--46955},
  year={2024}
}

@inproceedings{morozov2025revisiting,
  title={Revisiting Non-Acyclic GFlowNets in Discrete Environments},
  author={Morozov, Nikita and Maksimov, Ian and Tiapkin, Daniil and Samsonov, Sergey},
  booktitle={International Conference on Machine Learning},
  pages={44887--44910},
  year={2025},
  organization={PMLR}
}

@inproceedings{chervov2025rubik,
  author    = {Alexander Chervov and Kirill Khoruzhii and Nikita Bukhal and Jalal Naghiyev and Vladislav Zamkovoy and Ivan Koltsov and Lyudmila Cheldieva and Arsenii Sychev and Arsenii Lenin and Mark Obozov and Egor Urvanov and Alexey M. Romanov},
  title     = {A machine learning approach that beats Rubik’s cubes},
  booktitle = {Proceedings of the 39th Conference on Neural Information Processing Systems (NeurIPS 2025)},
  year      = {2025}
}

@article{mcaleer2018solving,
  title={Solving the Rubik's cube without human knowledge},
  author={McAleer, Stephen and Agostinelli, Forest and Shmakov, Alexander and Baldi, Pierre},
  journal={arXiv preprint arXiv:1805.07470},
  year={2018}
}

@article{agostinelli2019solving,
  title={Solving the Rubik’s cube with deep reinforcement learning and search},
  author={Agostinelli, Forest and McAleer, Stephen and Shmakov, Alexander and Baldi, Pierre},
  journal={Nature Machine Intelligence},
  volume={1},
  number={8},
  pages={356--363},
  year={2019},
  publisher={Nature Publishing Group UK London}
}

@article{mulholland2016permutation,
  title={Permutation puzzles: a mathematical perspective},
  author={Mulholland, Jamie},
  journal={Departement Of mathematics Simon fraser University},
  volume={54},
  year={2016}
}

@article{chervov2025cayleypy,
  title={Cayleypy rl: Pathfinding and reinforcement learning on cayley graphs},
  author={Chervov, A and Soibelman, A and Lytkin, S and Kiselev, Igor and Fironov, S and Lukyanenko, A and Dolgorukova, A and Ogurtsov, A and Petrov, F and Krymskii, S and others},
  journal={arXiv preprint arXiv:2502.18663},
  year={2025}
}

@article{
takano2023selfsupervision,
title={Self-Supervision is All You Need for Solving Rubik{\textquoteright}s Cube},
author={Kyo Takano},
journal={Transactions on Machine Learning Research},
issn={2835-8856},
year={2023}
}

@book{sutton1998reinforcement,
  title={Reinforcement learning: An introduction},
  author={Sutton, Richard S and Barto, Andrew G and others},
  volume={1},
  number={1},
  year={1998},
  publisher={MIT press Cambridge}
}

@article{hart1968formal,
  title={A formal basis for the heuristic determination of minimum cost paths},
  author={Hart, Peter E and Nilsson, Nils J and Raphael, Bertram},
  journal={IEEE transactions on Systems Science and Cybernetics},
  volume={4},
  number={2},
  pages={100--107},
  year={1968},
  publisher={IEEE}
}

@article{alon2007non,
  title={Non-backtracking random walks mix faster},
  author={Alon, Noga and Benjamini, Itai and Lubetzky, Eyal and Sodin, Sasha},
  journal={Communications in Contemporary Mathematics},
  volume={9},
  number={04},
  pages={585--603},
  year={2007},
  publisher={World Scientific}
}

@article{sutskever2014sequence,
  title={Sequence to sequence learning with neural networks},
  author={Sutskever, Ilya and Vinyals, Oriol and Le, Quoc V},
  journal={Advances in neural information processing systems},
  volume={27},
  year={2014}
}

@book{schrijver2003combinatorial,
  title={Combinatorial optimization: polyhedra and efficiency},
  author={Schrijver, Alexander and others},
  volume={24},
  number={2},
  year={2003},
  publisher={Springer}
}

@software{jax2018github,
  author = {James Bradbury and Roy Frostig and Peter Hawkins and Matthew James Johnson and Chris Leary and Dougal Maclaurin and George Necula and Adam Paszke and Jake Vander{P}las and Skye Wanderman-{M}ilne and Qiao Zhang},
  title = {{JAX}: composable transformations of {P}ython+{N}um{P}y programs},
  url = {http://github.com/jax-ml/jax},
  version = {0.3.13},
  year = {2018}
}

@article{tiapkin2025gfnx,
  title={gfnx: Fast and Scalable Library for Generative Flow Networks in JAX},
  author={Tiapkin, Daniil and Agarkov, Artem and Morozov, Nikita and Maksimov, Ian and Tsyganov, Askar and Gritsaev, Timofei and Samsonov, Sergey},
  journal={arXiv preprint arXiv:2511.16592},
  year={2025}
}

@article{ebendt2009weighted,
  title={Weighted {A*} search--unifying view and application},
  author={Ebendt, R{\"u}diger and Drechsler, Rolf},
  journal={Artificial Intelligence},
  volume={173},
  number={14},
  pages={1310--1342},
  year={2009},
  publisher={Elsevier}
}

@inproceedings{meister2020if,
  title={If beam search is the answer, what was the question?},
  author={Meister, Clara and Cotterell, Ryan and Vieira, Tim},
  booktitle={Proceedings of the 2020 Conference on Empirical Methods in Natural Language Processing (EMNLP)},
  pages={2173--2185},
  year={2020}
}

@article{ba2016layer,
  title={Layer normalization},
  author={Ba, Jimmy Lei and Kiros, Jamie Ryan and Hinton, Geoffrey E},
  journal={arXiv preprint arXiv:1607.06450},
  year={2016}
}

@inproceedings{ioffe2015batch,
  title={Batch normalization: Accelerating deep network training by reducing internal covariate shift},
  author={Ioffe, Sergey and Szegedy, Christian},
  booktitle={International conference on machine learning},
  pages={448--456},
  year={2015},
  organization={pmlr}
}

\newpage

\onecolumn

\title{Learning Shortest Paths with Generative Flow Networks\\(Supplementary Material)}
\maketitle

\appendix

\section{Backward Policy Assumptions}\label{app:assumpt}

Consider a random walk on $\cG$ with reversed edges and transition probabilities given by a backward policy $\PB$. Specifically, define a Markov chain $\{X_n\}_{n=0}^\infty$, such that $X_0 = s_f$ a.s. and $\P[X_t = s \mid X_{t-1} = s'] = \PB(s \mid s')$. Also define $\P[X_t = s_0 \mid X_{t-1} = s_0] = 1$. Assumption~\ref{pb_assumption} means that the expected length of this walk is finite: $\E[n_{\tau}] = \E[\sum_{t=0}^{\infty} \ind\{X_t \not = s_0 \}] < \infty$.

This implies that the random walk $X$ terminates in $s_0$ with probability $1$: $\P[\exists t : X_t = s_0 ] = 1$, because otherwise the event $\{ \sum_{t=0}^{\infty} \ind\{X_t \not = s_0 \} = \infty\}$ has non-zero probability, contradicting the finiteness of $\E[n_{\tau}]$. From this, it automatically follows that $\cP(\tau)$ is a correct probability measure over $\cT$ since for any $\tau = (s_0,s_1\ldots,s_{n_{\tau}}, s_f)$ it holds
\[
    \P[(X_{n_{\tau}+1},\ldots,X_{0}) = \tau] = \prod_{t=0}^{n_\tau} \PB(s_{t} \mid s_{t+1}) = \cP(\tau)\,,
\]
and we have
\[
    \sum_{\tau \in \cT} \P[(X_{n_{\tau}+1},\ldots,X_{0}) = \tau] = \P[\exists \tau \in \cT:  (X_{n_{\tau}+1},\ldots,X_{0}) = \tau] = \P[\exists t: X_t = s_0] = 1,
\]
since the events $\{(X_{n_{\tau}+1},\ldots,X_{0}) = \tau\}$ do not intersect for different $\tau$. 

The above reasoning directly follows a part of the proof of Lemma 3.4 of \cite{morozov2025revisiting}. The same Lemma 3.4 shows that the fact $\PB(s \mid s') > 0 \; \forall \tau \in \cT$ implies $\E[n_{\tau}] < \infty$, making it a stronger assumption than Assumption~\ref{pb_assumption}. The main problem of this assumption is that it generally does not hold for $\PB$ with the smallest $\E[n_{\tau}]$, as Theorem~\ref{theorem:smalles_length} suggests, making the theory of \cite{morozov2025revisiting} not directly applicable to the task of learning non-acyclic GFlowNets with the smallest expected trajectory length. However, upon closer inspection of theoretical results of \cite{morozov2025revisiting}, one can note that the only part that actually uses this assumption is Lemma 3.4 itself, and proofs of all other results only require the fact that $\E[n_{\tau}] < \infty$ and the fact that $\cP(\tau)$ is a correct probability measure over $\cT$, thus can be carefully re-written under Assumption~\ref{pb_assumption}.

The same theoretical results under a more general Assumption~\ref{pb_assumption} can also be directly derived from the results which assume $\PB(s \mid s') > 0$. E.g., Remark~\ref{remark_pf} references Proposition 3.8 of~\cite{morozov2025revisiting}, which requires $\PB(s \mid s') > 0$. To apply this result to $\PB$ that can assign zero probability to certain transitions, one can consider a subset $\cT' \subseteq \cT$ consisting of trajectories $\tau'$ with $\cP(\tau') > 0$. This subset induces a subgraph $\cG'$ of $\cG$ that satisfies Assumption~\ref{graph_assumption} since 1) it contains $s_f$ and $s_0$, 2) any state $s$ it contains by construction lies on some trajectory $\tau' \in \cT'$. Then, for any $\PB(s \mid s')$ under Assumption~\ref{pb_assumption}, there exists a unique $\PF$ that induces the same distribution over $\cT'$ by Proposition 3.8 of \cite{morozov2025revisiting} applied to $\cG'$, where $\PB$ assigns non-zero probabilities to all transitions. For transitions not lying in $\cG'$, this $\PF$ is then simply defined to be zero.

\section{Proof of Theorem \ref{theorem:smalles_length}}\label{app:proofs}

\begin{proof}
Our proof consists of the two main steps. First, we show that taking only shortest paths guarantees the minimum expected length. Second, we show that minimizing the expected length strictly forbids taking non-shortest paths.

\paragraph{Part 1: If the policy only takes shortest paths, it minimizes the expected length ($\Leftarrow$).}

Assume that if trajectory $\tau$ ending at state $x$ has a non-zero probability ($\cP(\tau) > 0$), its length $n_\tau$ must exactly be equal to the shortest possible path length to that state, denoted as $\cl(x)$. Because the policy satisfies the reward matching condition, we know the probability of terminating at any state $x \in \cX$ is fixed at $\frac{\cR(x)}{\cZ} > 0$. By Equation~\eqref{eq:trajlen}
\[
\E[n_\tau] = \sum_{x \in \cX} \frac{\cR(x)}{\cZ}  \E[n_\tau | s_{n_\tau} =x ] = \sum_{x \in \cX}\frac{\cR(x)}{\cZ} \cl(x)\,.
\]
According to Lemma~\ref{lemma:lower_bound}, this sum represents the absolute theoretical lower bound for the expected length of any valid backward policy. Since our policy achieves this exact lower bound, it successfully minimizes the expected trajectory length.

\paragraph{Part 2: If the policy minimizes expected length, it must only take shortest paths ($\Rightarrow$).}

We will prove this by contradiction. Suppose there is at least one trajectory $\tau$ ending at some terminal state $x'$ that is longer than the shortest path from $s_0$ to $x'$ ($n_\tau > \cl(x')$), and the policy gives this trajectory a non-zero probability ($\cP(\tau) > 0$). Then, we have
\[
\E[n_\tau \mid s_{n_\tau} = x'] > \cl(x').
\]
For all other terminal states $x \in \cX$, the average path length is at least the shortest path length, meaning $\E[n_\tau \mid s_{n_\tau} = x] \ge \cl(x)$. Now, applying Equation~\eqref{eq:trajlen}
\[
\E[n_\tau] = \sum_{x \in \cX} \frac{\cR(x)}{\cZ} \E[n_\tau \mid s_{n_\tau} = x] > \sum_{x \in \cX} \frac{\cR(x)}{\cZ} \cl(x)\,.
\]
However, Lemma~\ref{lemma:best_pb_existence} proves that it is possible to construct a backward policy that achieves the exact theoretical minimum on the right-hand side. Because our policy $\PB$ results in a strictly greater expected trajectory length, it does not minimize $\E[n_\tau]$, which contradicts our initial assumption. 
\end{proof}

\section{Experimental Details}\label{app:exp_details}

In all experiments, similarly to previous works~\citep{agostinelli2019solving, chervov2025rubik}, we use a variant of residual MLP architecture to parameterize our models, consisting of six $\operatorname{ReLU}(\operatorname{Linear}(\operatorname{LayerNorm}(x)) + x)$ blocks. A difference from previous works is that we replace batch normalization~\citep{ioffe2015batch} with layer normalization~\citep{ba2016layer} to improve training and evaluation stability. $\PF(s' \mid s, \theta)$ and $\PB(s \mid s', \theta)$ share the same backbone, with different linear heads predicting the logits of the forward policy and the logits of the backward policy. Note that the number of forward policy logits is the number of transitions from the state in the initial graph plus one logit corresponding to stop action.

We use JAX~\citep{jax2018github} for implementing our approach due to its acceleration capabilities through just-in-time (JIT) compilation of the entire training loop, which was shown to significantly speed up GFlowNet training~\citep{tiapkin2025gfnx}.


\subsection{Swap Puzzle}

We train all models for 100,000 iterations with a batch size of 128 using AdamW optimizer~\citep{loshchilov2017decoupled} with weight decay $10^{-5}$. We set the learning rate to $3 \cdot 10^{-4}$ and use a hidden size of 1024. We found clipping the gradient norm to be helpful to stabilize training, and use a threshold of $100$. Models in Figure~\ref{fig:permsort_main} were trained with regularization coefficient $\lambda = 10^{-3}$ for $n=15$ and $\lambda = 10^{-4}$ for $n=20$. We set $N_{\text{max}} = 50$.

\subsection{Rubik's Cube}

We use 500,000 training iterations with a batch size of 128 for 2x2x2 Rubik's Cube, and 1,000,000 training iterations with a batch size of 2048 for 3x3x3 Rubik's Cube. We set the learning rate to $3 \cdot 10^{-4}$ and use a hidden size of 1024 for 2x2x2 Rubik's Cube and 2048 for 3x3x3 Rubik's cube. All models are trained using AdamW optimizer~\citep{loshchilov2017decoupled} with weight decay $10^{-5}$. We found clipping the gradient norm to be helpful to stabilize training, and use a threshold of $100$. Models in Table~\ref{cube_table} were trained with regularization coefficient $\lambda = 10^{-2}$ for 2x2x2 Rubik's Cube and $\lambda = 5 \cdot 10^{-7}$ for 3x3x3 Rubik's cube.  We set $N_{\text{max}} = 12$ for 2x2x2 Rubik's Cube and $N_{\text{max}} = 24$ for 3x3x3 Rubik's cube. We found these values to lead to efficient training, despite being smaller than longest paths in the respective environments. This further demonstrates generalization capabilities of the proposed approach.

To evaluate against CayleyPy Cube~\citep{chervov2025rubik}, we use the official implementation provided by the authors\footnote{\url{https://github.com/khoruzhii/cayleypy-cube}}. We train all models using the same amount of sampled cube configurations to ensure a fair comparison.

\end{document}